\newcommand{\ph}{\varphi}
\newcommand{\bmath}{\boldsymbol}
\newcommand{\bb}{\mathbb}
\newcolumntype{L}{>{\arraybackslash}m{7cm}}
\newcommand{\fon}[1]{\fontfamily{#1}\selectfont} 
\newtheorem{statement}{Statement}
\newtheorem{definition}{Definition}
\newtheorem{theorem}{Theorem}
\newtheorem{lemma}{Lemma}
\begin{document}
	\title{Practical machine learning is learning on small samples}
	
	\author{Marina Sapir} 
	\date{}
\maketitle

\begin{abstract}
	
 Based on limited  observations, machine learning discerns a dependence which is expected to hold in the future. What makes it possible? Statistical learning theory imagines  indefinitely increasing training sample to justify its approach.   In reality,  there is no infinite time or even infinite general population for learning.  
 
 Here I argue  that practical machine learning is based on an implicit assumption that underlying dependence is relatively ``smooth" :  likely, there are no abrupt differences in feedback between cases with close data points.
	
	From this point of view learning shall involve  selection of the  hypothesis ``smoothly" approximating the training set.   I formalize this as  Practical  learning paradigm. The paradigm includes terminology and rules for description of learners.  Popular learners (local smoothing, k-NN, decision trees, Naive Bayes,    SVM for classification and for regression) are shown here to be implementations of this paradigm. 

\end{abstract}

\section{Givens, goals and assumptions}

Denote $\Omega$ the set of real life objects of interest.  For example, this  may be patients with skin cancer, or bank clients or  engine failures. 
The objects have two types of properties: hidden (feedback) and manifested, which are called  \textbf{features}.  The properties are assigned  numerical values.  Denote $X$ domain of  numerical feature vectors for objects in $\Omega$.   The hidden property has values in a numerical domain denoted by $Y$. 

The \textbf{underlying  dependence} $\varphi:   X \rightarrow Y$ is what we learn. It is  expected to be inexact.  There are several reasons for inexactness, for example:
\begin{itemize}
	\item   we do not know what the feedback, actually, depends on; 
	\item  numerical evaluations of features have intrinsic  uncertainty;
	\item the objects in $\Omega$ and the dependence $\varphi$ are changing in time; new objects may somehow be different from what we observed before.  
\end{itemize}

\textbf{Hypothesis} is a function $f: X \rightarrow Y.$
\textbf{Cases} are pairs $\langle x, y \rangle$, where $x \in X, y \in Y.$ If $f$ is a hypothesis, then cases $\langle x, f(x)\rangle$ are called \textbf{hypothetical.} Denote $\ulcorner f \urcorner$ all hypothetical cases for the hypothesis $f.$ 
\textbf{Observations}  are cases from direct inspections of objects in $\Omega.$ A \textbf{training set} $T$ is a finite set o observations available for analysis.   By default, no two cases in $T$ have identical first component. 

Denone $\mathbb{M}(h, T) = T \cup \ulcorner h \urcorner$ set of all cases for the hypothesis $h$ and the training set $T:$ all these cases together are representation of the underlying dependence. 

 The next assumptions summarizes our intuitive understanding of the problem:

\begin{tcolorbox} [title =Implicite learning assumptions, fonttitle=\fon{pbk}\bfseries]
\begin{itemize}
	\item 	Any two observations $\langle x_1, y_1 \rangle,  \langle x_2, y_2 \rangle $ with ``similar" features $x_1, x_2$ likely have ``similar" feedback $y_1, y_2.$
	\item  For any two hypotheses $h_1, h_2,$ a hypothesis $h \in \{h_1, h_2\}$  with lower ``inconsistency"  on the set of cases $\mathbb{M}(h, T)$ represents the underlying dependence $\varphi$  better. 
	\item The goal of learning is to find a hypothesis with lowest ``inconsistency" on $\mathbb{M}(h, T).$
\end{itemize} 	
	\end{tcolorbox}
	
	The assumption explain why we undertake the learning without having an indefinitely increasing training set or a knowledge of all relevant features.  I will show that the assumption is  sufficient to build efficient learners in many cases. 

Below,  I define the concepts  of these assumptions in general terms and  show on examples of popular learners how they are interpreted. 

I will overload the  notation $[\cdot]$ to mean different things for different types of data. For: 

\hfill \break
\begin{tabular}{lll}
Vector $x:$ & $ \big[x \big]_i$ & is  $i$-th component of $x$; \\	
Case $\alpha:$ & $ \big[\alpha \big]_i$ & is $i$-th component of $\alpha$;\\
Sequence of cases $T:$ & $ \big[T \big]_i$ & is sequence of  $i$-th components of cases in $T$.\\
\end{tabular}
\hfill \break

 \section{Traditional  views on ML} 
 
 It is interesting to observe that ML problem does not have a general definition. Here are some popular views on the problem.

\subsection{Is ML an induction?}

It is a common belief that ML is an induction inference. 

The dictionary says that induction is a method of reasoning from a part to a whole, from particulars to generals, or from  an individual to universal. 

Roughly,  induction extends common known property of observations onto the general population. 
 
 Say, we know  that for any observation $\alpha$ in a training set  $T$  we have $ \big[\big[\alpha \big]_1\big]_i \approx 1,$ so we infer that for any case $\beta$  in the general population $\big[\big[\beta \big]_1 \big]_i \approx 1.$ This act of projecting a known property  of the training set onto the general population would be called induction. 

Learning requires to select a hypothesis to characterize the dependence between features and feedback on the training set $T$.    So, in ML,  the hypothesis was not known, and the act of learning does not include extending it on the general population. Therefore, learning as a logical procedure  does not have any properties of induction. 

Peirce called selection of a hypothesis which optimally agrees with observations  an\textbf{ abduction procedure} \cite{PeirceV1}. 

After we learned the hypothesis on the training data, we implicitly apply the induction step to extrapolate  it  onto the general population.  Then we use a deduction as well to obtain predicted feedback for some new observations.  Thus, both induction and deduction follow learning, but are not the learning itself. 

\subsection{Is ML a prediction problem?} 

ML is sometimes understood as a prediction problem. It is assumed that the goal is to predict values of the inexact dependence $\ph$ on  manifested features of some future observations.

  For example, here is how the prediction problem is formulated in \cite{Stability}: 
  \begin{quote}
  Given a training set $S$ and data point $x$  of a new observation  $\langle x,  ?  \rangle$; predict its feedback $y.$
  \end{quote} 
  The main issue with this definition is that it does not provide a criterion  to evaluate the decision.  It appears that to select between the hypotheses we need to know what is not given: the future. 
  
  Therefore, the ML problem can not be formulated as a prediction problem, as it is defined in \cite{Stability}.

\subsection{Does Statistical Learning describe ML? }

Statistical Learning  (SL) theory is  the only commonly accepted theoretical approach to ML.

	This is how the  proponents of the SL theory   understand the problem: 
\begin{quote}
	Intuitively, it seems reasonable to request that a learning algorithm, when presented more and more training examples, should eventually  ``converge” to an optimal solution. \cite{StatTheory} 
\end{quote}

The ``optimal solution'' here means  a hypothesis to which the infinitely increasing training set converges. 
		
 But why is it intuitive to assume that there is ever more training examples, if  in reality  it is not the case?   V.  Vapnik \cite{VapnikBook} formulated the justification of the statistical approach  in the most direct way 

\begin{quote}
	\textit{Why do we need an asymptotic theory $\langle \cdots \rangle$ if the goal is to construct algorithms from a limited number of observations?
		The answer is as follows:
		To construct any theory one has to use some concepts in terms of which the theory is developed $\langle \cdots \rangle.$ }
\end{quote}

In other words, statistical learning theory  assumes  indefinite increase of the training set so that the statistical approach can prove some results. Statistics  has laws of large numbers, so the problem has to be about ever increasing training sets and  convergence. 

From the statistical learning theory point of view, empirical risk minimization (ERM), or fitting the observations, shall provide a satisfactory solution. 

Since the assumptions are not satisfied, the results of the theory can not be useful.  Indeed, ERM is rejected by practitioners, because it is known to ``overfit".  The same is true for some other learners, satisfactory from statistical learning perspective.

We have to conclude that statistical learning theory  does not describe ML as a it is known to practitioners.

\section{Practical learning. Main concepts}

\subsection{Definitions}
\begin{definition}
	A \textbf{Problem statement} is the tuple $\{X, Y, F, v\},$
	where $X, Y$ are domains, $F$ is class of functions $X \rightarrow Y$, $v$ is an optional  vector of parameters.
	\end{definition}

\begin{definition}
\textbf{Practical learning paradigm}  is  the tuple

$$ \Xi = \langle \mathcal{P}, T, H, \mathit{\xi},\mathit{\mu}, \Lambda \rangle,$$
where\\

\begin{align*}
	\mathcal{P} : & \text{ problem statement} \{X, Y, F, v\}\\
	T: & \text{ independent variable, training set}\\ 
	H(f, T): &  \;  \text{ \textbf{baseline cases} for } f \in F, \\
	&  H(f, T) \subseteq Q_1: \; Q_1 \in \{T, \ulcorner f \urcorner\}\\
	\xi(\alpha, f): &  \text{ \textbf{counterparts} for } \alpha, \alpha \in H(f, T):  \\
	&  \xi(\alpha, f) \subseteq Q_2: Q_2 \in \{T, \ulcorner f \urcorner\} \setminus Q_1\\
   \mu(\alpha, v): &  \; H(f, T) \rightarrow \mathbb{R}^+, \text{ \textbf{case- inconsistency} between } \alpha \text{ and } \xi(\alpha, f)  \\
	\Lambda(f, T, v): &  \text{  \textbf{total inconsistency}, } \\ 
	 & \forall \alpha \in H(f, T),  \Lambda(f, T, v)\text{  monotonously depends on } \mu(\alpha, v)  
\end{align*}
\end{definition}

\begin{definition}
A \textbf{practical learner} includes  
\begin{itemize}
\item interpretation  of all terms in the tuple $\Xi$  using  formulas and algorithms;  
\item  procedure, which, given the problem statement $\mathcal{P} = \{X, Y, F, v\}$ and a training set $T,$ searches  the solution 
$$f = arg \min_{h \in F_0}\Lambda(h, T, v) .$$ 
\end{itemize}
\end{definition}

\subsection{Intended meaning}

The practical learning paradigm represents a terminology and rules for description of machine learners.  One may notice here some rough approximation to a formal system and a language of a  learning logic, which may be developed in a future. 

The fundamental concepts for each learner are \textit{baseline cases} and their \textit{counterparts}. Baseline cases and counterparts belong to different types of cases (if baseline cases are observations, then counterparts are hypothetical cases and vice versa). Counterparts  $\xi(\alpha, f, T)$ are supposed to be defined as  ``\textit{similar}"  to $\alpha$ both in features and feedback.   The \textit{case inconsistency} $\mu(\alpha, v)$  evaluates the degree by which this assumption of similarity is violated.  \textit{Total inconsistency} evaluates degree to which the assumption of similarity  is violated on the set of all baseline cases.  Thus, a learner described within the Practical learning paradigm  evaluates the violation of assumed ``closeness" between observations and the hypothetical cases for a given hypothesis. It means, a learner  interprets  the Implicit learning assumptions in its own way. 

For example,  in empirical risk minimization (ERM) strategy,  the \textit{baseline cases} are observations $H(f, T)  =T.$  And for a baseline case  $\alpha = \langle x_i, y_i \rangle, i \in [1, m],$ its set of \textit{counterparts}  $\xi(\alpha, f, T)$ consists of a single hypothetical  case of a hypothesis $f$
$$\xi(\alpha, f, T) = \{ \langle x_i,  f(x_i)\rangle  \}.$$
The\textit{ case - inconsistency} is defined as 
$$\mu(\alpha) = \|y_i - f(x_i)\|$$
and\textit{ total  inconsistency} as 
$$\Lambda(h, T) = \sum_{\alpha \in H(h, T)} \mu(\alpha).$$

The ERM learner is the most primitive example of a practical learner, because here each set of counterparts contains only one case and the inconsistency between a baseline case and its counterpart is defined by the distance between feedback of the cases.  

Next I will show that many popular learning algorithms are, indeed, practical  learners.

\section{Local smoothing is a practical learner}

The problem statement here $\mathcal{P} = \{X, Y, F, x_0\}$ has an additional parameter $x_0 \in X$ and, may be, some other parameters. The task is to ``smoothly" approximate  a function $f \in F$ in a point $x_0$ given estimates of the function values in some points, the training set $T = \{\langle x_i, y_i \rangle, i = 1, \ldots m\} .$  

The hypotheses class $F$ consists  of functions defined on a single point $x_0$. For each hypothesis $h \in F$ there is a single baseline case, the hypothetical case  $\alpha(h)  =\langle x_0, h(x_0) \rangle.$

The\textit{ counterparts} $\xi(\alpha(h) )$ for a \textit{baseline case} $\alpha(h)$   are defined as subset of the training set $T$ such that the points $\big[ \xi(\alpha(h)) \big]_1$ are in some sense closest to the point $x_0.$ A smoothing learner determines how exactly these counterparts are selected, perhaps, using some additional parameters in the problem statement. 

The\textit{case-inconsistency} may be defined by the rule: 
$$\mu(\alpha(h)) = \Big| h(x_0)  - \mathit{mean}\big( \big[\xi(\alpha(h), T) \big]_2\big)\Big|.$$

Since there is only one baseline case,  \textit{total inconsistency} is the same as single case - inconsistency, $\Lambda(h, T) = \mu(\alpha(h)).$

A smoothing learner searches for a hypothesis  minimizing  total inconsistency. Obviously, the hypothesis 
$$h(x_0) = \mathit{mean}\big( \big[\xi(\alpha(h), T) \big]_2\big)$$ solves the problem. 

Smoothing assumes that the all counterparts in  $\xi(\alpha(h))$  together better represent expected value of the function than the single closest data point, for example,  because of inherent measurement uncertainty  in $x, y.$ 

Even though smoothing is not typically considered among the machine learning problems, it is an archetypal problem of   practical machine learning. A practical learner, whichever problem it solves,   does ``smoothing" of a sort, finding a less ``inconsistent"  approximation of the inexact underlying dependence.

\section{k-NN is a practical learner}

The problem statement  $\mathcal{P}$ is formulated here with $X$ being a metric space and binary feedback in $Y = \{0, 1\}$. It also has the additional parameters $x_0 \in X,$ $k \in \mathbb{N}.$ The class of hypotheses here consists of two functions 
$$F= \{h_0(x_0) = 0, h_1(x_0) = 1\}, $$ 
and each function $h \in F$ has a single hypothetical case,  which is taken by the  learner as a  baseline case $H(h) = \{\alpha(h) \} = \{ \langle x_0, h(x_0)\rangle\}.$ 

 For the single baseline case $\alpha(h)$ the counterparts $\xi(\alpha(h), T)$ are $k$ observations  from the training set $T$ with  feature vectors closest to the $x_0.$ 
 
The case - inconsistency $\mu(\alpha(h))$ is defined by the same formula, as for smoothing: 
$$\mu(\alpha(h)) = \Big|  h(x_0) - \mathit{mean}\big( \big[\xi(\alpha(h)) \big]_2\big)\Big|.$$

The same goes for total inconsistency  $\Lambda(h, T) = \mu(\alpha(h)).$ 

The learner selects as a solution $\mathbf{f}$ one of two hypotheses in $F$ which minimizes total inconsistency $\Lambda(h, T).$

\section{Decision tree is a practical learner}

In the problem statement of this learner the domain $X$ has only ``ordinal'' features: every  feature has finite number of ordered  values; there are no operations on feature values.  The feedback of observations is binary, $Y = \{0, 1\}$.  There is also an additional parameter  $x_0 \in X$ and some other parameters which control the construction of the tree.  The class of functions $F$ contains two hypothesis $h_0(x_0) = 0, h_1(x_0) = 1..$ 

The hypothetical case $\alpha(h) = \langle x_0, h(x_0)\rangle$ for the hypothesis $h$ is the only baseline case for this hypothesis. The decision tree learner defines the set of counterparts  $\xi(\alpha(h)))$ not by a formula, but by a certain procedure partitioning the domain $X$ into the subdomains, defined by intervals of features values.

To determine the subdomains,  the learner starts with whole domain, splits it in two subdomains by a value of some feature.  There are certain criteria for stopping the splitting of a given subdomain and declaring it a ``leaf".  Various versions of the  learner have different criteria. After a leaf was found, the algorithm switches to the next subdomain which did not undergone all possible splits yet. 
 
The set of counterparts $\xi(\alpha(h))$ is defined by the leaf $A(x_0),$ where $x_0$ belongs: the learner picks as an counterpart every observation, which has its first component in $A(x_0):$ $$ \beta \in  \xi(\alpha(h)) \Leftrightarrow \big( \beta \in  T \; \& \;[\beta]_1 \in A(x_0)\big).$$

 The case - inconsistency between $\alpha(h)$ and $\xi(\alpha(h))$ is 
 $$ \mu(\alpha(h)) = \Big| h(x_0) - \mathit{mean}\big[ \xi(\alpha(h))\big]_2\Big|,$$
 the same as for $k$-NN and smoothing.  Total inconsistency for a single baseline case coincides with case - inconsistency:
 $\Lambda(T, h) = \mu(\alpha(h))$.  And the learner picks the hypothesis with the lowest total inconsistency, indeed.

\section{Naive Bayes is a practical learner}

 In the problem statement $\mathcal{P}$, the domain  $X$ has  $n$-dimensional vectors of nominal values. Denote $C^i$ finite  set of values of a feature $i$ in $X$ and assume for any $i \neq j$ $C^i \cap C^j = \emptyset.$ $Y = \{0, 1\}.$
  
  The learner designed for the situation when  every feature is related with the feedback, and features compliment each other: perhaps, they were selected (designed) this way.  The problem statement has an additional parameter $x_0 \in X.$

The learner re-defines the problem statement $\mathcal{P}$  as new problem statement  $\mathcal{P}^\prime = \{X^\prime, Y, F^\prime, x_0^\prime\}. $ The domain 
$$X^\prime = \bmath{C} = \cup C^i.$$ 
Instead of the vector $x_0$, the problem statement $\mathcal{P}^\prime$ contains sequence $x_0^\prime = \{ [x_0]_1, \ldots, [x_0]_n\}$ of its coordinates.  The class of functions $F^\prime$ contains 2 constant functions with values in  $Y.$ Each function $h \in F^\prime$ is defined in $n$ points of the sequence  $x_0^\prime.$

Accordingly, the training set $T$ for the problem $\mathcal{P}$ is transformed into training set $T^\prime,$
where 
every observation $\langle x, y \rangle \in T$  is transformed into $n$ observations $$\{  \langle \big[x]_1, y \rangle, \ldots, \langle  \big[x]_n, y \rangle\}$$ in the new training set $T^\prime.$

Each function $h \in F^\prime$ has $n$ hypothetical cases which are taken as baseline cases $H(h).$

For each baseline case $\alpha  \in H(h),$ its counterparts are defined by the formula
$$\xi(\alpha, T^\prime) = \{ \beta:  (\beta \in T^\prime) \& ( \big[\beta \big]_1 =  \big[\alpha \big]_1)\}.$$

Then the local inconsistency for a baseline case $\alpha \in H(h)$ and its counterparts  is defined as 
$$\mu(\alpha) =  \frac{1}{ \|\xi(\alpha, T^\prime)\| } \underset{\beta \in \xi(\alpha, T^\prime)}{\sum} I\Big( \big[\beta \big]_2 \neq  \big[\alpha \big]_2\Big).$$

The total inconsistency for a hypothesis $h$ with baseline cases  $H(h)$ is defined by formula
$$\Lambda(h, T) = \underset{\alpha \in H(h)}{\prod} \mu(\alpha).$$ It is obvious that $\Lambda(h, T)$ is monotonous by each $\mu(\alpha)$ from $H(h).$

Thus,  Naive Bayes is a practical learner, and it does not need probabilistic reasoning for its justification.

\section{Linear SVM for classification is a practical learner}

The problem statement contains domains   $X = \mathbb{R}^n, Y = \{-1, 1\}$ and an additional parameter $w.$ The class of hypotheses $F$ consists of  linear functions $f(x) $ with $n$ variables.  For a function $f(x) = x^T b+ a$ denote $$\big[f\big]_1 = b, \big[f\big]_2 = a.$$ 

Denote $m$ the number of elements in the training set $T = \{\langle x_1, y_1 \rangle, \ldots , \langle x_m, y_m \rangle\}.$

We will need a fact from linear algebra: the distance $\rho(x_0, f)$  from a point $x_0$ to the hyperplane $f(x) =  0$ is defined by formula \cite{Shalev} $$\rho(x_0, f) = |f(x_0)|.$$ 

The problem is formulated explicitly as minimization of the criterion with some slack variables  $\bmath{\zeta} = \{\zeta_i, i \in [1,m]\}$
\begin{tcolorbox} [title = Linear SVM, fonttitle=\fon{pbk}\bfseries]
	\begin{align}
L(f, T, \bmath{\zeta})  =   w \, \| \big[f\big]_1 \|^2 + \frac{1}{m} \sum_{i =1 }^m \zeta_i  	\label{cri}	 \\ 
\text{s.t. }  \underset{i \in [1:m]}{\forall \; i}  \; \;  y_i  f(x_i) \ge 1 - \zeta_i\; \text{ and } \; \zeta_i  \ge 0. 
		\label{conditions} 
	\end{align}
\end{tcolorbox}

I need to show that the learner, actually, minimizes what can be called total inconsistency between properly defined  baseline cases and their counterparts in accordance with the Practical learning paradigm. 

For this, I want to simplify the criterion. Let us consider  an alternative problem:

\begin{tcolorbox} [title = Linear SVM*, fonttitle=\fon{pbk}\bfseries]
\begin{align} 
	L^*(f, T)  =   w \, \| \big[f\big]_1 \|^2 + \frac{1}{m} \sum_{i =1 }^m \zeta_i  	\label{cri1}\\
	\text{s.t. } \forall i
	\begin{cases}
		\zeta_i = 0, & \text{if  } y_i f(x_i) \ge 1 \\
	    \zeta_i = 1 - y_i f(x_i), & \text{if  } 	y_i f(x_i) < 1 
	    	\label{conditions1} 
	\end{cases}
\end{align}
\end{tcolorbox}

The slack variables $\bmath{\zeta} = \{\zeta_i,  \in [1, m]\}$ are restricted by inequalities (\ref{conditions})  in SVM problem, but they determined uniquely for each function  $f \in F$ by equalities (\ref{conditions1}) in SVM*.   This is why the criterion (\ref{cri}) of the SVM problem  depends of the slack variables $\bmath{\zeta}$ and the criterion in SVM* (\ref{cri1}) does not depend on $\bmath{\zeta}.$

\begin{statement}
	For a function $f \in F,$ if slack variable $\zeta_i,  1 \le i \le m,$ satisfies (\ref{conditions1}) it  also satisfies  (\ref{conditions}). 
\end{statement}
\begin{proof}
	Suppose,  the conditions (\ref{conditions1}) are true for $\zeta_i$. 
	
	First, let us assume $y_i \, f(x_i) \ge 1.$ Then $\zeta_i =0$ according with (\ref{conditions1}). We need to show that in this case both inequalities  (\ref{conditions}) 
	\begin{align}
		y_i \, f(x_i) \ge & \,1 - \zeta_i \label{thefirst} \\ 
		 \zeta_i \ge & \, 0. \label{thesecond}
\end{align}
	are satisfied.
	
	Substituting 0 for $\zeta_i$ in (\ref{thefirst}) we will get what we assumed to be true in this case ($y_i \, f(x_i) \ge 1$). Substituting 0 in (\ref{thesecond})  will get trivial formula $0 \ge 0$. 
	
	Consider the second case in (\ref{conditions1}):   $y_i\, f(x_i) < 1, $  with $\zeta_i = 1 - y_i \, f(x_i).$ Substituting this expression instead of $\zeta_i$ in (\ref{thefirst}) 
	we will get $$y_i \, f(x_i) \ge 1 - (1 - y_i \, f(x_i))$$ which leads to trivial inequality $y_i \, f(x_i) \ge y_i \, f(x_i).$
	
	Substituting expression for $\zeta_i$ in (\ref{thesecond}) we will get $$1 - y_i \, f(x_i) \ge 0$$ or $$y_i \, f(x_i) \le 1,$$ which is true, because we consider the case $y_i\, f(x_i) < 1.$
\end{proof}

\begin{statement} For any $f \in F.$ for any $i \in [1, m]$, if slack variable  $\zeta_i$ satisfy conditions (\ref{conditions}), and slack variable $\zeta^*$ satisfy conditions (\ref{conditions1}), then  $\zeta^*_i \le \zeta_i.$ 
\end{statement}
\begin{proof} 
	Lets consider the options in (\ref{conditions1}) again and show that for each of two cases  $\zeta^*_i \le \zeta_i.$ 
	
	If $y_i f(x_i) \ge 1, $ then $\zeta^*_i = 0.$ By  (\ref{conditions}) $\zeta_i \ge 0,$ therefore $\zeta_i \ge \zeta^*_i$ in this case.

	If $y_i f(x_i) < 1$, then $\zeta^*_i = 1 - y_i f(x_i)$ from  (\ref{conditions1}).
	From  (\ref{conditions}), $$\zeta_i \ge 1 - y_i f(x_i) = \zeta^*_i.$$
\end{proof}

\begin{lemma}
	The problems  Linear SVM* and   Linear SVM are equivalent.
\end{lemma}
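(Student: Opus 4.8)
The plan is to read ``equivalent'' in the natural sense that the two problems share the same optimal value and the same set of minimizing functions $f$, and to reduce everything to the two preceding Statements. The essential structural remark is that both criteria (\ref{cri}) and (\ref{cri1}) carry the identical penalty term $w\,\|\big[f\big]_1\|^2$, while the slack-dependent term $\frac{1}{m}\sum_i \zeta_i$ enters with a strictly positive coefficient and is increasing in each $\zeta_i$ separately. This lets me decouple the minimization: for any fixed $f\in F$, minimizing $L$ over the slacks subject to (\ref{conditions}) amounts to choosing each $\zeta_i$ as small as the constraints allow, while the term $w\,\|\big[f\big]_1\|^2$ stays fixed.

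Next I would use the two Statements to pin down that smallest feasible choice. By Statement 2, every $\zeta_i$ feasible for (\ref{conditions}) satisfies $\zeta_i \ge \zeta^*_i$, where $\zeta^*_i$ is the value prescribed by (\ref{conditions1}); and by Statement 1 this $\zeta^*_i$ is itself feasible for (\ref{conditions}). Hence for each fixed $f$ the vector $\bmath{\zeta}^*$ is the minimizer of $L(f, T, \cdot)$ over all slacks feasible for (\ref{conditions}), giving
$$\min_{\bmath{\zeta} \text{ feasible for } (\ref{conditions})} L(f, T, \bmath{\zeta}) = L(f, T, \bmath{\zeta}^*) = L^*(f, T),$$
where the last equality holds because substituting (\ref{conditions1}) into (\ref{cri}) is exactly the definition of (\ref{cri1}).

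Finally I would take the minimum over $f \in F$ on both sides. Since the inner minimization over $\bmath{\zeta}$ reproduces $L^*(f, T)$ pointwise in $f$, we obtain
$$\min_{f \in F}\;\min_{\bmath{\zeta}} L(f, T, \bmath{\zeta}) = \min_{f \in F} L^*(f, T),$$
so the two optimal values coincide and a function $f$ solves Linear SVM (together with its optimal slacks $\bmath{\zeta}^*$) if and only if it solves Linear SVM*. I do not expect a genuine obstacle here, since Statements 1 and 2 already supply both the feasibility and the minimality of $\bmath{\zeta}^*$. The one point that needs care is the decoupling argument itself: I must justify that, because $L$ is strictly increasing in each $\zeta_i$, the constrained infimum over the slacks is actually attained exactly at the lower bound $\bmath{\zeta}^*$ rather than merely approached, so that the inner minimum equals $L^*(f,T)$ for every $f$ and the passage to $\min_{f\in F}$ is legitimate.
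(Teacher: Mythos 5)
Your proof is correct and takes essentially the same route as the paper's: both arguments rest on Statement 1 (feasibility of $\bm{\zeta}^*$ for (\ref{conditions})) and Statement 2 (pointwise minimality $\zeta^*_i \le \zeta_i$) to conclude that for each fixed $f$ the slack minimization is attained exactly at $\bm{\zeta}^*$, where $L(f,T,\bm{\zeta}^*) = L^*(f,T)$. The only difference is presentational: you package the paper's two separate implications (SVM* solution solves SVM, and conversely) into the single nested-minimization identity $\min_{f}\min_{\bm{\zeta}} L(f,T,\bm{\zeta}) = \min_{f} L^*(f,T)$, which delivers both directions at once.
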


\begin{proof}
	
	Let us show that $f^*$ with slack variables $\bm{\zeta}^*(f)$ minimizing $L^*(f, T)$ minimize $L(f, T, \bm{\zeta(f)})$ as well.
	
	The criterion $L(f, T, \bm{\zeta}(f))$ in (\ref{cri}) may have different values for the same function $f$  depending on the slack variables $\bm{\zeta}(f)$ satisfying (\ref{conditions}). By the Statement 1, $\bm{\zeta}^*(f)$ satisfies  (\ref{conditions}). Therefore, $L(f, T, \bm{\zeta}^*(f))$ is one of the values this criterion may take for the function $f$. 
	
	Also, it is easy to see that $$L(f, T, \bm{\zeta}^*(f)) = L^*(f, T)$$ as it is defined in (\ref{cri1}). 
	
	By the Statement 2, for any $i$ $\zeta_i^*(f) \le \zeta_i(f).$ Therefore, 
	 $$L(f, T, \bm{\zeta}(f)) \ge L(f, T, \bm{\zeta}^*(f)).$$
	 
	By definition,  $f^*$ is solution of SVM*, therefore for any $f \in F$
	$$L^*(f, T) \ge L^*(f^*, T).$$ 
	
	Now we have for any $f \in F$ for any $\bm{\zeta}(f)$ satisfying (\ref{conditions})
	\begin{align*}
		L(f, T, \bm{\zeta}(f)) & \ge L(f, T, \bm{\zeta}^*(f))\\
	 & =  L^*(f, T)\\
	 & \ge L^*(f^*, T) \\
	 & = L(f^*, T, \bm{\zeta}^*(f^*)).
	 \end{align*}

Since $\bm{\zeta}^*(f^*)$ satisfies (\ref{conditions}) by the Statement 1, this proves that $f^*$ with slack variables defined by (\ref{conditions1}) provides minimum to the criterion in the problem SVM. 

Now, suppose  $f^\prime$ with slack variables $\bm{\zeta}(f^\prime)$ minimize $L(f, T, \bm{\zeta}(f)).$
Let us show that in this case  $f^\prime$ minimizes $L^*(f, T)$ as well.

By the Statement 2, for every $f \in F$, for every  $i$, if $\zeta_i(f)$ satisfies restrictions (\ref{conditions}) and $\zeta_i^*(f)$ satisfies (\ref{conditions1}) then $$\zeta_i^*(f) \le \zeta_i(f).$$  By the Statement 1, in this case $\bm{\zeta}^*$ satisfies restrictions (\ref{conditions}) as well. Therefore, $$L(f^\prime, T, \bm{\zeta}^*(f^\prime))$$ is a valid value of the criterion of the SVM problem and the criterion achieves minimum when $\bm{\zeta}(f^\prime) = \bm{\zeta}^*(f^\prime).$

It means that for any $f \in F$ 
$$L(f, T, \bm{\zeta}(f)) \ge L(f^\prime, T, \bm{\zeta}^*(f^\prime))$$
when $\bm{\zeta}(f)$ satisfies (\ref{conditions}). In particularly, it is true when $\bm{\zeta}(f) = \bm{\zeta}^*(f).$
So for any $f \in F$
$$L(f, T, \bm{\zeta}^*(f)) \ge L(f^\prime, T, \bm{\zeta}^*(f^\prime)).$$
At the same time for any $f \in F$ $$L(f, T, \bm{\zeta}^*(f)) = L^*(f, T).$$ It follows that for any $f \in F$
$$L^*(f, T) \ge L^*(f^\prime, T).$$
This proves that $f^\prime$ solves the the SVM* problem. 

\end{proof}

\begin{theorem} A learner solving the problem Linear SVM is a practical learner.
	\end{theorem}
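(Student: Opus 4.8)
The plan is to exhibit a concrete interpretation of every entry of the tuple $\Xi$ for the SVM learner and then use the Lemma to carry the conclusion from SVM* back to the original SVM. First I would keep the given problem statement $\mathcal{P} = \{X, Y, F, w\}$ and take the baseline cases to be the observations themselves, $H(f, T) = T$, so that $Q_1 = T$. For a baseline case $\alpha_i = \langle x_i, y_i\rangle$ I would define its single counterpart to be the hypothetical case $\xi(\alpha_i, f) = \{\langle x_i, f(x_i)\rangle\} \subseteq \ulcorner f \urcorner$, exactly as in the ERM example. This respects the requirement that counterparts lie in the opposite type, $Q_2 = \{T, \ulcorner f \urcorner\}\setminus Q_1 = \{\ulcorner f \urcorner\}$.

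The delicate point is the case-inconsistency, because the SVM criterion carries a global regularization term $w\,\|[f]_1\|^2$ that is not naturally attached to any single observation. My plan is to distribute this term uniformly over the $m$ baseline cases and fold it together with the hinge slack. Concretely I would set
$$\mu(\alpha_i, w) = w\,\|[f]_1\|^2 + \zeta_i, \qquad \zeta_i = \max\{0,\; 1 - y_i f(x_i)\},$$
where $\zeta_i$ is precisely the slack determined by the equalities (\ref{conditions1}), and $\mu(\alpha_i, w) \ge 0$ since $w \ge 0$. I would then define total inconsistency as the averaged sum
$$\Lambda(f, T, w) = \frac{1}{m}\sum_{i=1}^m \mu(\alpha_i, w) = w\,\|[f]_1\|^2 + \frac{1}{m}\sum_{i=1}^m \zeta_i,$$
so that $\Lambda(f, T, w)$ coincides identically with the SVM* criterion $L^*(f, T)$ of (\ref{cri1}). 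Since $\Lambda$ is a positive linear combination of the quantities $\mu(\alpha_i, w)$, it is monotone in each of them, which meets the paradigm's monotonicity condition.

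Having matched all entries of $\Xi$, it remains only to connect the minimization. A practical learner must return $\arg\min_{h \in F}\Lambda(h, T, w)$; by construction this equals $\arg\min_{h \in F} L^*(h, T)$, i.e. the solution of SVM*. By the Lemma, SVM* and SVM are equivalent, so this same minimizer solves the original SVM problem (\ref{cri})--(\ref{conditions}). Hence the SVM learner searches for exactly the hypothesis prescribed by the paradigm and is a practical learner.

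I expect the main obstacle to be the regularization term, since it is the only part of the SVM objective with no per-observation meaning, and the argument rests on showing that splitting it evenly across cases breaks neither the numerical identity $\Lambda = L^*$ nor the monotonicity of $\Lambda$. Everything else -- the choice of baseline cases, the ERM-style single counterpart, and the passage from the inequality-constrained SVM to the closed-form SVM* -- is routine once the Lemma is available.
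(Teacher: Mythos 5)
Your proposal is mechanically sound and shares the paper's skeleton --- reduce to Linear SVM* via the Lemma, take the observations as baseline cases, and identify the SVM* criterion with total inconsistency --- but the interpretive core is genuinely different from the paper's. The paper does not use ERM-style counterparts: for a baseline case $\alpha_i = \langle x_i, y_i\rangle$ it takes as counterparts \emph{all} cases $\langle x, y_i\rangle$ with $x$ in the half-space $z(f, y_i) = \{x :\; y_i f(x) \ge 1\}$, and defines $\mu(\alpha_i)$ as the distance from $x_i$ to that half-space (zero if $\alpha_i$ already lies in it), which exactly recovers the slack $\zeta_i$ of (\ref{conditions1}); the regularizer $w\,\|[f]_1\|^2$ is kept entirely outside the case-inconsistencies and enters only through $\Lambda = L^*$, which is legitimate because the paradigm requires only that $\Lambda$ depend monotonously on each $\mu(\alpha_i)$, not that it be a function of them alone. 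Your route buys formal tightness on one axis: your counterparts $\{\langle x_i, f(x_i)\rangle\}$ genuinely lie in $\ulcorner f \urcorner$, whereas the paper's $\langle x, y_i\rangle$ are not literally hypothetical cases of the real-valued $f$. But it loses on another: folding $w\,\|[f]_1\|^2$ into every $\mu(\alpha_i)$ means your case-inconsistency is no longer computable from $\alpha_i$ and $\xi(\alpha_i, f)$ alone, so it is not an ``inconsistency between $\alpha$ and its counterparts'' in the sense the paradigm intends --- a case correctly classified with margin, whose counterpart agrees with it as well as it ever can, still receives strictly positive inconsistency merely because $f$ has a steep slope. The paper's construction keeps $\mu$ semantically honest ($\mu(\alpha_i) = 0$ exactly when the case sits among its counterparts) and frees the regularization term to be interpreted, as the paper does immediately after the theorem, as inconsistency of the hypothetical cases with one another; that reading of the Implicit learning assumptions is exactly what your folding obscures.
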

\begin{proof}
By the Lemma 1, the problems Linear SVM and Linear SVM* are equivalent. So it is enough to prove the theorem for Linear SVM*.  

For each hypothesis $f \in F, $ the learner de facto takes training set $T$  as \textit{baseline cases} $$H(f, T) = T = \{ \alpha_i = \langle x_i, y_i \rangle,\, i\in [1,m]\}. $$

 There are two half-spaces of interest in $X:$ 
 $$z(f, y) = \{x: \; y \, f(x) \ge 1\}, y \in \{-1, 1\}.$$
 
 The set $z(f, y)$ is one of two  half-spaces separated by a hyperplane $y \, f(x) - 1 = 0.$

For a baseline case  $\alpha_i  = \langle x_i, y_i\rangle \in T$ its set of \textit{counterparts}   $\xi(\alpha_i, f)$  is determined by $y_i:$
$$\xi(\alpha_i, f) = \{\langle x,  y_i \rangle: \; x \in z(f, y_i) \}$$

Denote $\rho(x, z(f, y) )$ the distance from $x$ to the hyperplane, separating the half- space $z(f, x):$
$$\rho(x, z(f, y) ) = |y \, f(x)  - 1|.$$ 
If $x \not \in z(f, z)$ it means $y\, f(x) < 1.$ For this case 
\begin{align}
\rho(x, z(f, y) ) = 1 - y \, f(x).   \label{dist}
\end{align}
Let us define case - inconsistency by the rule:
\begin{align*}
	\mu(\alpha_i) = 
	\begin{cases}
		0,  & \text{ if  } \alpha_i \in \xi(\alpha_i, f)\\
		\rho(x_i, z(f, y_i))  & \text{ otherwise}
		\end{cases}
\end{align*} 

In view of (\ref{dist}) it can be rewritten as 
\begin{align*}
	\mu(\alpha_i) = 
	\begin{cases}
		0,  & \text{ if  } y_i \, f(x_i) \ge 1\\
		1 - y_i \, f(x_i) & \text{ otherwise.}
	\end{cases}
\end{align*} 

This shows that for every $i \in [1,m]$ $$\mu(\alpha_i) = \zeta_i$$ as defined in (\ref{conditions1}).

It is obvious that $L^*(f, T)$ monotonously depends on each $\mu(\alpha_i) = \zeta_i.$

It proves that the learner which solves Linear SVM* problem (and it equivalent Linear SVM problem) is a practical learner.
\end{proof}

The criterion (\ref{cri1}) has two components. The first one, $ w \, \| \big[f\big]_1 \|^2$ depends only on the hypothesis $f.$ 

This component $a(f) = \| \big[f\big]_1 \|^2$ directly evaluates derivative of the hypothesis. The smaller is $a(f)$ , the less the hypothesis changes in each point. We can say that this component  characterizes  inconsistency of the hypothetical cases with each other, while the second component of the criterion, $\sum(\zeta_i),$ characterizes inconsistency between the training set and the hypothetical instances. Both together they define the goal of learning, minimization of inconsistency on $\mathbb{M}(f, T) = T \cup \ulcorner f \urcorner,$ as is specified by  the Implicit learning assumptions. 

Specifics of this learner is that it ignores small errors: when $f(x)$ has wrong sign $f(x) y_i < 0$ but $|f(x)| < 1$  the case inconsistency is equal 0. The designer of the learner assumed all large and  only large errors are meaningful for selection of the separating hyperplane: small errors are considered a random noise. 

But when is this an optimal strategy?  If an exact  linear separation between classes exists, small errors may occur due to measurements uncertainty, but large errors signal wrong separating hyperplane. Then this approach shall work.  However,  if the optimal separation  on the general population has large errors, occurring rarely, by chance they may appear comparatively more often in a small sample. In this case, one would prefer to ignore the largest errors.

\section{Linear Support vector regression is a practical learner}

The problem statement contains domains   $X = \mathbb{R}^n, Y = \mathbb{R}.$ The class of hypotheses $F$ consists of  linear functions $f(x) $ with $n$ variables.  The problem statement includes two additional parameters $\varepsilon, \lambda.$

For the training set $T = \{\beta_1, \ldots, \beta_m\}$ the learner minimizes criterion (as defined in \cite{Hastie})

\begin{tcolorbox} [title = SV Regression, fonttitle=\fon{pbk}\bfseries]
$$L_{svr}(f, T) = \sum_{i = 1}^m V_\varepsilon\big([\beta_i]_2  - f([\beta_i]_1)\big) + \lambda \|[f]_1\|^2, $$
where 
\[
V_\epsilon(r) = 
\begin{cases}
	0, & \text{if  } |r| < \varepsilon\\
	|r| - \epsilon, & \text{otherwise}
\end{cases}
\]
\end{tcolorbox}

The \textit{baseline cases} here are observations: $H(f, T) = T.$ For a baseline case $\beta_i = \langle x_i, y_i\rangle$ its \textit{counterparts} $\xi(\beta_i, f)$ consists of the single case $\xi(\beta_i, f) = \{\langle x_i, f(x_i)\rangle\}, $ the same as in ERM method. The \textit{case-inconsistency} is also  defined similar to ERM, but   with modifier $V_\epsilon:$ 
$$\mu(\beta_i) = V_\varepsilon(|y_i - f(x_i)|).$$ The modifier allows to ignore small errors, as in SVM for classification. 

The criterion $L_{svr}(f, T)$ can be called total inconsistency because it monotonously depends on each $\mu(\beta_i).$

The second component in the criterion $L_{svr}(f, T)$ is regularization, or measure of inconsistency of the hypothetical instances, the same as in the SVM.

So, the linear support vector regression is a practical learner as well. 

\section{Advantages of Practical learning theory}

The main advantage of the theory which agrees with practical applications is that it allows to rise practically important questions which are meaningless in statistical learning  theory. Here are some of these questions.

\subsubsection{Selection of a learner}

Within statistical learning theory learners are incomparable, formulated in different languages. Besides, there is not reason to compare them: any learner which converges to the solution when the training set indefinitely increases is as good as any other such learner. 

In Practical learning the learners are described in the same language. We saw that differences between learners are explainable by assumptions about the data and the underlying dependence. This allows meaningful comparison of them. 

Besides, learners may be characterized by their robustness toward peculiarities of small samples. Question about robustness of a learner is practically important but is   meaningless in statistical learning. 

\subsubsection{Testing}
Most applied machine learning specialists know that you can not evaluate a decision rule  on the same data, on which you obtained it. But the textbooks on machine learning are usually mum about it because it does not make sense in statistical learning theory, when the data are expected to increase indefinitely and the underlying dependence is assumed to be exact. 

Practical learning allows to understand testing as a logical operation necessary in the process of learning.

\subsubsection{Outliers}
If we assume, as statistical learning does, that the training set can be increased indefinitely, and eventually will approximate the distribution and the function on data we are learning, there can not be any outliers. Otherwise it is easy to understand that training sample may contain disproportional number of observations which do not fit in the general tendency of the underlying dependence. It makes  the question about outliers become critical. 

\subsection{Data sufficiency}
Small sample may be not sufficiantly representative. This is the problem in practical applications, but it does not make sense in statistical learning theory. Practical learning allows to formulate the problem and search for its solution. 

\section{Conclusions}

Currently, theoretical machine learning is a statistical learning.    Where statistical learning sees approximation of a function with  infinitely increasing number of observations,   practical learning has to deal with uncertainty in data, non-deterministic dependencies,   limited   time-frame for learning  and all the problems which come from these restrictions. 

I propose to see machine learning not as statistical, but as a logical problem which Peirce called abduction: search for a hypothesis which ``explains" observations the best in terms of some quantifiable inconsistency of a hypothesis and observations. This allows one to approach the real life problems machine learning practitioners are facing. 
 
 Implicit learning  assumptions (ILA) are proposed here as an alternative to the assumption of indefinitely increasing training set of the statistical learning. ILA view machine learning as minimization of ``inconsistency" on the set of observations and hypothetical cases. The proposed Practical learning paradigm  includes terminology and rules to describe practical learners. The paradigm is based on the  ILA. 

 I demonstrated  that  such learners as k-NN, Naive Bayes, decision tree, linear SVM for classification, and linear SVM  for regression, which appear to have different approaches, justifications and structure,  all can be described within Practical learning paradigm.  One can say also that each of these learners is a formalization of the ILA, because it minimizes a quantifiable ``inconsistency" on the set of all the hypothetical cases and observations.  
 
 The regularization component  in the criteria of SVM and SVR learners nicely fits in the proposed view, because it evaluates inconsistency on the hypothetical cases. 

The examples shall support the conjecture  that practical learning is not  a statistical, but a logical problem. In the next articles I plan to show that  popular learners for  other machine learning problems (clustering, for example) can be explained within  the Practical learning paradigm.

\bibliographystyle{plain} 
\bibliography{Smoothing} 

\end{document}